\def\eqref#1{equation~\ref{#1}}
\def\1{\bm{1}}
\def\vu{{\bm{u}}}
\def\vv{{\bm{v}}}
\def\vw{{\bm{w}}}
\def\vx{{\bm{x}}}
\def\vy{{\bm{y}}}
\def\mD{{\bm{D}}}
\def\mL{{\bm{L}}}
\def\mU{{\bm{U}}}
\def\mV{{\bm{V}}}
\def\mW{{\bm{W}}}
\DeclareMathAlphabet{\mathsfit}{\encodingdefault}{\sfdefault}{m}{sl}
\SetMathAlphabet{\mathsfit}{bold}{\encodingdefault}{\sfdefault}{bx}{n}
\def\gC{{\mathcal{C}}}
\def\gG{{\mathcal{G}}}
\def\gS{{\mathcal{S}}}
\newcommand{\Ls}{\mathcal{L}}
\newcommand{\R}{\mathbb{R}}
\newcommand{\N}{\mathbb{N}}
\newcommand{\normlone}{L^1}
\newcommand{\normltwo}{L^2}
\newcommand\dt[2]{\langle#1,#2\rangle}
\newtheorem{thm}{Theorem}
\newtheorem{lemm}{Lemma}
\newcommand{\chk}{{\centering\checkmark}}
\begin{document}

\title{Exploring Weight Symmetry in Deep Neural Networks}

\author[1]{Xu Shell Hu\thanks{Equal contribution}}
\author[2]{Sergey Zagoruyko$^{*}$}
\author[1]{Nikos Komodakis}
\affil[1]{Universit\'e Paris-Est, \'Ecole des Ponts ParisTech}
\affil[2]{Inria, Paris, France}
\affil[ ]{
          \texttt{sergey.zagoruyko@inria.fr},
          \texttt{\{xu.hu,nikos.komodakis\}@enpc.fr}
         }

\date{}
\maketitle

\vspace{-1cm}
\begin{abstract}
  We propose to impose symmetry in neural network parameters to improve parameter usage and make use of dedicated convolution and matrix multiplication routines. Due to significant reduction in the number of parameters as a result of the symmetry constraints, one would expect a dramatic drop in accuracy. Surprisingly, we show that this is not the case, and, depending on network size, symmetry can have little or no negative effect on network accuracy,
  especially in deep overparameterized networks. We propose several ways to impose local symmetry in recurrent and convolutional neural networks, and show that our symmetry parameterizations satisfy universal approximation property for single hidden layer networks. We extensively evaluate these parameterizations on CIFAR, ImageNet and language modeling datasets, showing significant benefits from the use of symmetry. For instance, our ResNet-101 with channel-wise symmetry has almost 25\% less parameters and only 0.2\% accuracy loss on ImageNet. Code for our experiments is available at~\url{https://github.com/hushell/deep-symmetry}
\end{abstract}

\section{Introduction}

For a long time neural networks had a capacity problem: making them have too many parameters for a limited amount of data would dramatically affect their generalization capabilities. Thus, several regularization techniques were developed, for example, early stopping~\citep{Weigend.Huberman1990Predictingfutureconnectionist} and $\normltwo$ regularization.
The advent of batch normalization~\citep{batch_norm}, skip-connections~\citep{Hochreiter97longshort-term,highway,he2015deep}, and overall architecture search helped to mitigate this problem very recently, so that increasing capacity no longer hurts the accuracy, but even before that it was known that ``unimportant'' connections between neurons can be removed, resulting in a network with significantly fewer parameters and little or no drop in performance.
Optimal brain damage~\citep{optimal_brain_damage} proposed a second order method to prune such connections. Soft weight sharing~\citep{Nowlan:1992:SNN:148167.148169,ullrich_soft_weight_sharing} can be used to very effectively reduce the number of parameters in a trained network. \cite{denil_predict} showed that it is possible to learn part of filters and predict the rest. In fact, parameter sharing is one of the most important features of convolutional neural networks, where sharing is built into convolutional layer, and was thoroughly explored. Recurrent neural networks also heavily rely on sharing parameters over multiple time steps. It is also the key element in training siamese and triplet networks.

More recently, trained on massive amounts of data image recognition neural networks were quickly increasing in the number of parameters, starting from AlexNet~\citep{AlexNet} and VGG~\citep{Simonyan15}.
Network-In-Network~\citep{nin} proposed to stack MLPs which share local receptive fields, removing the need of massive fully-connected layers, and reducing the number of parameters to achieve the same accuracy as AlexNet by several times.
This technique was further adopted by~\cite{GoogLeNet} in their Inception architectures. Downsampling and upsampling can also be seen as a way of parameter sharing in SqueezeNet~\citep{squeezenet}. More recently, Highway~\citep{highway} and later ResNet~\citep{he2015deep} proposed to add skip-connections, which allowed training of very deep networks with improved accuracy. It was later shown by Wide ResNet~\citep{Zagoruyko2016WRN} that the number of parameters was key to their accuracy, and depth was complementary. Either very deep, or very wide residual networks could be trained with massive number of parameters, without suffering from decreased accuracy. After that, parameter sharing exploration continued on ResNet architectures. For example, \cite{Boulch} proposed to share some portion of convolutional filters in each group of residual block, for example every second convolution, achieving relatively small performance loss.

Not all parameter sharing approaches brought both performance improvement and reduction in parameters, and there appears to be some trade-off between parameter sharing and computational efficiency. As an extreme case of parameter sharing with significant performance loss, HyperNetworks~\citep{hypernetworks}, proposed to have another network to generate filters. Among methods improving computational efficiency, a good example is MobileNet~\citep{mobilenets}, which suggested to reparameterize each \(3\times3\) convolution as a pair of depth-wise convolution and \(1\times1\) convolution, can also be seen as a way of sharing parameters. Their work was later extended to grouped \(1\times1\) convolution by ShuffleNet~\citep{shufflenet}, which further reduced the complexity of residual block.

Another approach to reducing number of parameters is post processing, when a trained networks is modified, either with or without additional fine tuning. A number of low rank approximation by tensor decomposition approaches were proposed, such as works of \cite{jaderberg_2014} and \cite{vlebedev2015}. For example, \cite{denton_svd} used low-rank decomposition of the weight matrices to reduce the effective number of parameters in the network. Such approaches tend to be less applicable as more parameter-effective architectures are being invented.
A very effective way of reducing parameters in a trained network is deep compression proposed by~\cite{han2015deep_compression}, with a reduction of parameters of dozens of times, achieved by pruning, weight sharing, quantization and compression. Weight sharing locations are determined by weight values in a trained network, similar in value neurons share the same weight. Common approach today is to train a massive network, and reduce it later. This can even improve results over single-time trained network~\citep{song_han_dsd}.

Recurrent neural networks are known to be significantly overparameterized as well.
For example, \cite{Kim2016SequenceLevelKD} show that it is possible to reduce neural machine translation  model size by $13\times$ with an insignificant BLEU score drop by doing teacher-student knowledge distillation. Several works~\citep{Merity2016PointerSM,Inan2016TyingWV,Kim2016CharacterAwareNL} focus on reducing number of parameters in language modeling tasks.

All of the above suggest that architectures and methods used for training deep neural networks are suboptimal, and could be significantly improved by organizing parameter sharing from the start. It would be interesting to learn it, and a few attempts were made, e.g.\ in DCT space with hashing~\citep{compressing_chen} and FFT space~\citep{mathieu_fft}. Also, automatic architecture search approaches, such as~\citep{zoph17}, do not handle sharing.%

Despite so much work on reducing neural network parameters, surprisingly, very little attention has been given to weight symmetry. To the best of our knowledge, it has not been used in the context of parameter sharing so far. We propose a number of possible approaches to enforce symmetry on weights \textit{from scratch}, i.e.\ not using a pretrained network. Imposed correctly, such symmetry can significantly reduce the number of parameters and computational complexity by sacrificing little or no accuracy, both at train and test time. Post-processing methods such as deep compression could be applied later.
Also, our parameterizations are simple to implement in any modern framework with automatic differentiation. Besides, specialized routines for symmetric matrix multiplication and convolution could be used in both training and testing~\citep{hpc_blas,rajib_symv,Igual09BLAS3GPU}.

We believe that our findings are surprising and uncover interesting properties of deep neural networks, which could led to further advances in understanding and efficiency. Our contributions are summarized below.

\begin{itemize}
  \item We propose an effective use of symmetry for model compression for the first time;
  \item Explore various ways of imposing symmetry constraints;
  \item Experimentally show that symmetry can successfully be imposed in various architectures and datasets, with no or little loss in accuracy;
  \item We show that our symmetric parameterization is generic and can be applied to both image classification and natural language processing tasks;
  \item We provide theoretical proof that neural networks with symmetric parameterization are universal approximators.
\end{itemize}

\section{Symmetric reparameterizations}

\newcommand{\WW}{\mW}

In this section we introduce several ideas to impose axial symmetry on the weights of convolutional / fully-connected layers, which prunes out a large fraction of redundant parameters and gains computational speed-up for both training and testing.
Throughout this section we consider all operations applied to matrices, which can be easily extended to multidimensional tensors by block symmetry.

\subsection{Motivation}

Real symmetric matrices have only real eigenvalues, while rotation matrices whose rotation angles are positive have at least one complex eigenvalue. Thus, restricting the weight matrix to be symmetric for some layers is equivalent to forcing these layers to learn non-rotational transformations. In a deep neural network, this restriction, which can be considered as an inductive bias, should not hurt the overall performance, since different layers are encouraged to focus on particular transformations, and then a good representation of the input is built up by compositing all of them.

On the other hand, introducing symmetric weights for network compression have multiple benefits:
\begin{itemize}
  \item Symmetric matrix multiplications could deliver potential speedup over generic BLAS routines.
  \item Theoretical guarantee in terms of universal approximation (see Section~\ref{sec:theo}).
  \item Training from scratch: it avoids additional fine tuning compared with post-processing methods~\citep{jaderberg_2014,vlebedev2015,han2015deep_compression}.
  \item Easy to combine with other compression methods (e.g. with ShaResNet \citep{Boulch}).
\end{itemize}

\subsection{Soft constraints}

We first try to see if it is possible to enforce symmetry constraints in a soft manner during training, which has the advantage that training procedure remains very similar to standard supervised training.
We start by adding a penalty term to the training loss, which leads to a soft constraint on $\WW$:
\begin{align}
  \Ls(\WW) + \rho \big\| {\rm{vec}}(\WW) - {\rm{vec}}({\WW}^\top) \big\|_p,
  \label{eq:soft}
\end{align}
where $\Ls(\WW)$ is a task-specific loss with respect to $\WW$; $\rho$ is a hyper-parameter to control the slackness of the constraints $\WW = {\WW}^\top$; ${\rm{vec}}(\mW)$ is an operator that vectorizes the matrix into a column vector. For the norm of the penalty term, we use either $p=2$ or $p=1$. However, $\normlone$-norm turns out to be slightly more effective given that it promotes sparsity, which would possibly result in a larger number of exactly symmetric weights.

Due to the soft constraints, the resulting matrix is not guaranteed to be symmetric and so at test time we use only the upper triangular part.

\subsection{Hard constraints}
\label{sec:hard_constr}

Alternatively, we can make use of a specific parameterization by a linear operator $T \colon \mW \mapsto \hat{\mW}$ for some layer in the neural network that explicitly encode symmetry, where $\mW$ is the actual weight and $\hat{\mW}$ is the constructed weight for that layer. We are interested in the case where $\mW$ has fewer elements than $\hat{\mW}$, which enables a reduction in weights while keeping exactly the same network architecture as if it is fully-parameterized. Note that we choose $T$ to be nonparametric, thus the only additional burden of training is to forward and backward propagate through $T$; while in testing, since $\hat{\mW}$ is symmetric, we only need to store the upper triangular
of the learned $\hat{\mW}$, which saves almost half of the space.

To this end, we propose different instantiations of $T({\mW})$ in terms of different ways to construct symmetric matrices.

\paragraph{Triangular parameterization}
One of the simplest way to impose hard symmetry constraint is to define the linear operator $T$ as a sum of the upper triangular matrix of $\mW \in \R^{N \times N}$, its transpose $\mW^\top$ and the diagonal vector \(\vv \in \R^N\):
\begin{align}
  \hat{\mW} = T({\rm{triu}}(\mW), \vv) := {\rm{diag}}(\vv) + {\rm{triu}}(\mW) + {\rm{triu}}(\mW)^\top,
\end{align}
where ${\rm{triu}}(\mW)$ returns the upper triangular part of the matrix $\mW$.
Note that we use the full matrix $\mW$ in the above expression just to simplify the notation. In fact, the triangular parameterization stores only the upper triangular of $\mW$ and the main diagonal $\vv$, while elements of the lower triangular will never be touched. Thus, the number of parameters in this parameterization is $\frac{1}{2} N(N + 1)$, reduced by almost 2 times compared to full parameterization. %
Note that (${\rm triu}(\mW)$, $\vv$) should be initialized from the same distribution as if $\mW$ is learned directly. Due to strong sharing,
the gradient with respect to $\mW$ will be twice higher in magnitude, so the learning rate needs to be adjusted accordingly.

\paragraph{Average parameterization}
Let us also consider a redundant, but more straightforward formulation of the reparameterization $T$, in which we keep \(N\times N\) matrix $\mW$ as the actual weight, and define $T$ as a sum of $\mW$ and its transpose \(\mW^\top\) divided by two:
\begin{align}
  \hat{\mW} = T(\mW) := \frac{1}{2}(\mW + \mW^\top).
\end{align}
Although this is a redundant parameterization to obtain a symmetric matrix, we wanted to explore its performance given that it is known that overparameterized networks are often easier to optimize / train (e.g., directly training compact networks is much more challenging compared to first training overparameterized ones and then properly pruning their parameters).
In this case, the learning rate remains the same since the gradient has been scaled by definition.
As in triangular parameterization, $\mW$ is initialized from the same distribution as basic non-symmetric parameterization. The number of parameters is the same as non-symmetric parameterization at training time, but \(\frac{1}{2}N(N+1)\) at testing time due to the fact that $\hat{\mW}$ is symmetric.

\paragraph{Eigen parameterization}
In addition, we consider a more generic {eigen parameterization}, inspired by the fact that any symmetric matrix has an eigen decomposition. Given a matrix \(\mV \in \R^{N\times R}\) and a vector \(\mathbf{\lambda} \in \R^R \) as actual weights, $T$ is defined by
\begin{align}
  \hat{\mW} = T(\mV, \mathbf{\lambda}) := \mV {\rm{diag}}(\mathbf{\lambda}) \mV^\top.
\end{align}
Ideally, $\mV$ has to be an orthogonal matrix, but we relax this constraint due to a heavy computation of performing projected stochastic gradient descent. We still initialize $\mV$ and $\mathbf{\lambda}$ from an eigen decomposition of the initial full weight matrix. The number of parameters in such relaxed parameterization is \(N(R+1)\) at training time, and \(\frac{1}{2}N(N+1)\) at testing time. %
We empirically choose $R = N/2$ as it yields similar performance as the case of $R=N$.

Note that a similar approach was suggested by~\cite{denil_predict}, where $\hat{\mW}$ is parameterized by matrices \(\mU\) and \(\mV\) with columns of \(\mU\) forming a dictionary of basis functions.

\paragraph{LDL parameterization}
There is a close relationship between the eigen decomposition of a matrix and its LDL decomposition. Recall that the LDL decomposition factorizes a matrix as a product of an unit lower triangular matrix $\mL$ (meaning that all elements on the diagonal are $1$'s), a diagonal matrix $\mD$ and the transpose of $\mL$. The advantage of using LDL decomposition over eigen decomposition is that it is much easier to maintain a valid decomposition of $\hat{\mW}$ during training.
We thus also consider a {LDL parameterization} with additional assumptions\footnote{If $\hat{\mW}$ is symmetric and it factorizes as $\hat{\mW} = \mL\mD\mU = \mU^\top \mD \mL^\top$, then by uniqueness, it follows that $\hat{\mW} = \mL\mD\mL^\top$. However, $\hat{\mW}$ has a LU decomposition if $\hat{\mW}$ satisfies a particular rank condition studied by \cite{okunev2005necessary}. Thus, we in fact assumes $\hat{\mW}$ is better conditioned.} on $\hat{\mW}$. Specifically, the reparameterization $T$ with actual weights $\mL$ and $\mD$ is given by
\begin{align}
  \hat{\mW} = T(\mL,\mD) := \mL \mD \mL^\top,
\end{align}
where $\mL$, $\mD$ are restricted to be unit lower triangular matrix and diagonal matrix respectively.

\paragraph{N-way symmetry parameterization}
Inspired by triangular parameterization, which can be viewed as an axial symmetry about the main diagonal, we consider a more general N-way parameterization with respect to multiple axes of symmetry, where N denotes the number of repeated parts. Thus, previously introduced parameterizations are 2-way symmetries.

\begin{figure}[t]
  \centering
  \foreach \figname in {orig,block_edited,quar_edited,eight_edited}
  {\subcaptionbox{}[0.19\columnwidth]{\includegraphics[width=0.17\columnwidth]{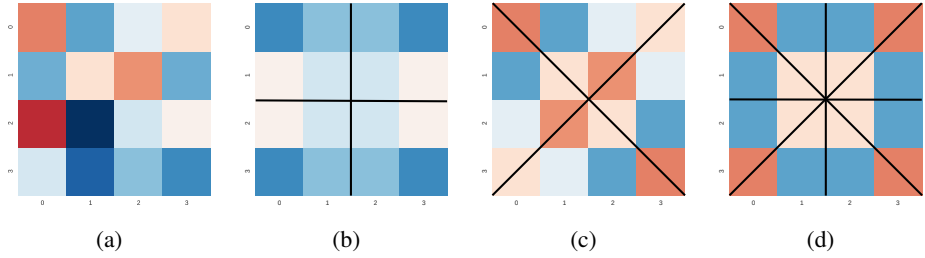}}}
    \caption{N-way parameterizations.
    (a) Original $4 \times 4$ weight matrix.
    (b) 4-way blocking: $\mV$ is the bottom-right block; $\hat{\mW} = \text{reflect}_{-}(\text{reflect}_{|}(\mV))$.
    (c) 4-way triangulizing: $\mV$ is the top triangle; $\hat{\mW} = \text{reflect}_{/}(\text{reflect}_{\backslash}(\mV))$.
    (d) 8-way triangulizing: $\mV$ is the top-left triangle; $\hat{\mW} = \text{reflect}_{/}(\text{reflect}_{\backslash}( \text{reflect}_{|}(\mV)))$.}
  \label{fig:n-way}
\end{figure}

Given $\mV$ as the actual weight, which is considered in general to be smaller than the required weight matrix $\mW$, we create a symmetrized version of $\mW$ by a composition of linear transformations
\begin{align}
  \hat{\mW} = T_1 \circ T_2 \circ \ldots \circ T_k( \mV ),
\end{align}
where $T_j(\cdot)$ is a basic linear operator which does one of the following things: translation, reflection, rotation, tessellation etc. In fact, the form of N-way symmetry is quite flexible. We consider only the cases where symmetry leads to efficient computations in testing. In this work, we focus on two N-way parameterizations: \emph{blocking} and \emph{triangulizing}. The details are listed as follows.
\begin{itemize}
\item Blocking: Given $\mV$ as a $\frac{M}{\sqrt{N}} \times \frac{M}{\sqrt{N}}$ matrix, $N$-way blocking can be obtained by a series of reflections (denoted by $\text{reflect}_\text{axis}(\cdot)$) to include mirrors.
\item Triangulizing: Given $\mV$ as an isosceles right triangle with area $\frac{M^2}{N}$, $\hat{\mW}$ is obtained by a series of reflections about different axes.
\end{itemize}
We demonstrate several examples of N-way symmetries in Figure~\ref{fig:n-way}.

\subsection{Combining with other methods}\label{methods:connection}

It is not surprising that hard-constrained symmetry parameterization can be viewed as a special weight sharing method. Nevertheless, symmetry parameterization is capable of taking advantage of special matrix computation routines to speed up both training and testing, which is not the case for unstructured weight sharing methods such as duplicating randomly picked elements to form $\hat{\mW}$.

Symmetry parameterization can also be complementary to other parameter reduction or weight sharing methods. For example, we force weight sharing not only between residual blocks within the same stage \citep{Boulch} but also within the same residual block using symmetry parameterizations. We show in Table~\ref{table:cifar_architectures} that triangular parameterization can be combined with ResNeXt~\citep{Xie2016} and MobileNet~\citep{mobilenets},
which have already been designed to take advantage of weight sharing.
Besides, post-processing methods \citep{han2015deep_compression} can be applied on the trained symmetric weights with fine tuning,
thus further reducing the number of parameters.
We show experiments on combing channel-wise symmetry
with other parameter reduction methods (e.g. ShaResNet \citep{Boulch}) in Section~\ref{sec:ShaResNet}. %

\subsection{Theoretical analysis}
\label{sec:theo}

As a motivation of applying symmetry from a theoretical point of view, we show that a symmetric feedforward neural network with one fully-connected hidden layer admits the universal approximation property. The analysis is based on the result of \cite{cybenko89}.
We briefly explain here the high level ideas behind the proof of the universal approximation property. %

The building block of a feedforward neural network is the sigmoid thresholded linear function of the form
\begin{align}
  \sigma_\vw(\vx) := \sigma(\dt{\vw}{\vx}),
\end{align}
where $\sigma(t)$ is a sigmoidal function such that $\sigma(t) \rightarrow 1$ as $t \rightarrow +\infty$ and $\sigma(t) \rightarrow 0$ as $t \rightarrow -\infty$. Then, a feedforward neural network $g_{\alpha,\mW}$ with a single hidden layer is a finite linear combination of $\{\sigma_{\vw_i}\}_i$ with $\vw_i$ the $i$-th row of $\mW$:
\begin{align}
  g^m_{\alpha,\mW}(\vx) := \sum_{i=1}^m \alpha_i \sigma_{\vw_i}(\vx) = \dt{\alpha}{\sigma(\mW \vx)}.
\end{align}
Define the sets $\Sigma := \{ \sigma_\vw \colon \vw \in \R^n \}$ and $\gG := \{ g^m_{\alpha,\mW} \colon m \in \N, \mW \in \R^{m\times n}, \alpha \in \R^m \}$. It is easy to see that $\gG = {\rm span}(\Sigma)$. The main result of \cite{cybenko89} shows that, on the domain $[0,1]^n$, $\gG$ is a dense subset of the set of continuous functions (denoted by $\gC([0,1]^n)$).

Now, consider the set $\gS := \{ g^m_{\alpha,\mW} \in \gG \colon m = n, \mW = \mW^\intercal \}$. We show that $\gS$ is still a dense subset of $\gC([0,1]^n)$. Equivalently, we formally state the result as follows.

\begin{lemm}[\cite{cybenko89}]
  \label{thm:1}
  The function $\sigma_\vw(\vx)$ satisfies the property that if
  $
  \int_{[0,1]^n} \sigma_{\vw}(\vx) d\mu(\vx) = 0\,
  $
  holds for all $\vw \in \R^n$, then $\mu = 0$.
\end{lemm}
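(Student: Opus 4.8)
The plan is to recognize the statement as exactly the assertion that $\sigma_\vw$ is \emph{discriminatory} in the sense of \cite{cybenko89}: the sought density in $\gC([0,1]^n)$ is dual --- via Hahn--Banach together with the Riesz representation of bounded functionals on $\gC([0,1]^n)$ by finite signed Borel measures --- to the statement that no nonzero such $\mu$ annihilates every $\sigma_\vw$, which is precisely what is claimed. So the work reduces to reproducing Cybenko's argument that a bounded sigmoidal nonlinearity is discriminatory, adapted to this parameterization.

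First I would exploit the limiting behaviour of $\sigma$. Fix $\vw$; for any scalar $\lambda>0$ the vector $\lambda\vw$ is again admissible, so the hypothesis gives $\int_{[0,1]^n}\sigma(\lambda\dt{\vw}{\vx})\,d\mu(\vx)=0$. Since $\sigma$ is bounded and $\sigma(\lambda t)\to\1_{\{t>0\}}$ pointwise as $\lambda\to+\infty$ (with value $\sigma(0)$ on $\{t=0\}$), dominated convergence yields $\mu(\{\dt{\vw}{\vx}>0\})+\sigma(0)\,\mu(\{\dt{\vw}{\vx}=0\})=0$, and letting $\lambda\to-\infty$ gives the companion relation for $\{\dt{\vw}{\vx}<0\}$. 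Next I would upgrade these two relations to the statement that $\mu$ vanishes on every half-space and every hyperplane \emph{separately}: in Cybenko's proof one inserts an independent offset inside $\sigma$ and lets it slide, which decouples the two terms; here the same effect must be produced from arguments of the form $\dt{\vw}{\vx}$ alone, e.g.\ by realizing general affine functionals $\vx\mapsto\dt{\vw}{\vx}+\theta$ through an augmented constant coordinate on the cube and using a further parameter to drive $\sigma$ toward each of its two limiting values. Once $\mu$ annihilates all half-spaces and hyperplanes, differencing gives annihilation of every slab $\{a<\dt{\vw}{\vx}\le b\}$, hence of every bounded measurable function of $\dt{\vw}{\vx}$; applying this to $\cos$ and $\sin$ gives $\int_{[0,1]^n}e^{i\dt{\vw}{\vx}}\,d\mu(\vx)=0$ for all $\vw\in\R^n$, so the Fourier--Stieltjes transform of $\mu$ vanishes identically and $\mu=0$.

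The step I expect to be the genuine obstacle is this decoupling: with only $\sigma(\dt{\vw}{\vx})$ at one's disposal, the limiting argument by itself controls solely those half-spaces whose bounding hyperplane passes through the origin, which is not enough to pin down $\mu$ on the cube. Making the argument rigorous therefore hinges on correctly accounting for the affine/offset degree of freedom --- whether it is implicit in the network's parameterization or is supplied by the geometry of the domain and the limiting values of $\sigma$. Everything afterwards (the dominated-convergence passages, the simple-function bootstrap to bounded measurable test functions, and Fourier uniqueness) is routine.
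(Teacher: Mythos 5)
Your reading of the statement is the right one: this lemma is exactly the assertion that $\sigma_\vw$ is discriminatory in Cybenko's sense, and the Hahn--Banach/Riesz duality you describe is what the paper carries out separately in Lemma~\ref{thm:2}. Note also that the paper gives no proof of this lemma at all --- it is quoted from \cite{cybenko89}, and in Cybenko's formulation the hypothesis quantifies over \emph{affine} arguments, i.e.\ $\int_{[0,1]^n}\sigma(\dt{\vw}{\vx}+\theta)\,d\mu(\vx)=0$ for all $\vw\in\R^n$ \emph{and} all $\theta\in\R$. With the offset restored, your outline (scale the argument, pass to the limit by dominated convergence to get half-space and hyperplane masses, slide $\theta$ to decouple them, bootstrap to bounded measurable functions of $\dt{\vw}{\vx}$, then conclude by vanishing of the Fourier--Stieltjes transform) is precisely Cybenko's proof and is complete up to routine details.

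However, the obstacle you flag at the end is not a technicality that can be engineered around: for the statement as literally printed (no offset, $\sigma$ only assumed sigmoidal) the claim is false, so no argument can close that gap. Concretely, take any continuous sigmoidal with $\sigma(0)=0$, e.g.\ $\sigma(t)=\min(\max(t,0),1)$, and $\mu=\delta_{\vzero}$: then $\int_{[0,1]^n}\sigma_{\vw}(\vx)\,d\mu(\vx)=\sigma(0)=0$ for every $\vw$, yet $\mu\neq0$. Your suggested repair via an ``augmented constant coordinate'' changes the domain and the measure, hence proves a different statement, not this one; and, as you suspected, the scaling argument alone only constrains $\mu$ on half-spaces whose boundary passes through the origin. (For specific analytic sigmoids such as the logistic the no-offset claim can actually be rescued, but by a genuinely different route: expand $F(\vw)=\int\sigma(\dt{\vw}{\vx})\,d\mu(\vx)$ at $\vw=\vzero$ to conclude that the total mass and all moments of odd total degree vanish, then use polarization together with the M\"untz--Sz\'asz theorem to see that these monomials already span a dense subspace of $\gC([0,1]^n)$; this requires hypotheses on $\sigma$ beyond those stated.) So your proposal is a correct reconstruction of the cited result once the bias term is included, but as a proof of the lemma as printed it has a genuine gap --- one inherited from the paper's own omission of the offset --- and you correctly isolated exactly where it lies.
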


\begin{lemm}%
  \label{thm:2}
  For any subset $\gS$ of $\gC([0,1]^n)$, such that $\Sigma \subseteq \gS$, it follows that $\gS$ is a dense subset of $\gC([0,1]^n)$.
\end{lemm}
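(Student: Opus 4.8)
The plan is to prove Lemma~\ref{thm:2} by the classical functional-analytic route of \cite{cybenko89}: translate density into the non-existence of a nonzero annihilating functional, represent such a functional as a signed measure via Riesz, and then eliminate the measure using Lemma~\ref{thm:1}. One preliminary remark: the Hahn--Banach step needs the set in question to be a linear subspace, so I would first pass to $\operatorname{span}(\gS)$ and observe that it suffices to prove $\operatorname{span}(\gS)$ is dense; in the concrete instances we care about (one-hidden-layer networks with symmetric weights, width unconstrained) finite sums of members of $\gS$ are again in $\gS$, so this costs nothing. Throughout the rest of the argument I therefore treat $\gS$ as a linear subspace with $\Sigma \subseteq \gS \subseteq \gC([0,1]^n)$.

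First I would argue by contradiction: suppose $\overline{\gS} \neq \gC([0,1]^n)$. Then $\overline{\gS}$ is a proper closed linear subspace of the Banach space $\gC([0,1]^n)$ under the sup norm, so by Hahn--Banach there is a bounded linear functional $L$ on $\gC([0,1]^n)$ with $L \neq 0$ and $L \equiv 0$ on $\overline{\gS}$; in particular, since $\Sigma \subseteq \gS$, we get $L(\sigma_\vw) = 0$ for every $\vw \in \R^n$. Next I would invoke the Riesz representation theorem for the dual of $\gC(K)$ with $K = [0,1]^n$ compact: there is a finite signed regular Borel measure $\mu$ on $[0,1]^n$, not identically zero because $L \neq 0$, with $L(h) = \int_{[0,1]^n} h(\vx)\, d\mu(\vx)$ for all $h \in \gC([0,1]^n)$. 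Applying this to $h = \sigma_\vw$ gives $\int_{[0,1]^n} \sigma_\vw(\vx)\, d\mu(\vx) = 0$ for all $\vw \in \R^n$, which is exactly the hypothesis of Lemma~\ref{thm:1}; that lemma then forces $\mu = 0$, contradicting $\mu \neq 0$. Hence no such $L$ exists, $\overline{\gS} = \gC([0,1]^n)$, and $\gS$ is dense.

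The only genuinely delicate point is the bookkeeping around linearity, since Lemma~\ref{thm:2} is ultimately applied to the class of networks with a symmetric weight matrix, which is not literally closed under addition. I would make this harmless by spelling out the ``padding'' observation behind the hypothesis $\Sigma \subseteq \gS$ — a single ridge function $\sigma_\vw$ is realized by placing $\vw$ in the first row and first column of an otherwise arbitrary symmetric matrix and setting all but the first output weight to zero — so that $\operatorname{span}(\gS) \supseteq \operatorname{span}(\Sigma)$, and when the width is unconstrained finite linear combinations of elements of $\gS$ again lie in $\gS$; thus density of $\gS$ and of $\operatorname{span}(\gS)$ coincide. Everything else (the reduction of density to vanishing annihilators, Hahn--Banach, and the Riesz representation of $\gC(K)^*$) is standard, and the substantive analytic content — the discriminatory property of $\sigma$ — has already been isolated in Lemma~\ref{thm:1}; so the proof is essentially Cybenko's density proof with $\operatorname{span}(\Sigma)$ replaced by the larger set $\gS$.
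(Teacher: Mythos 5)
Your proof follows essentially the same route as the paper's (which is Cybenko's): assume $\gS$ is not dense, use Hahn--Banach to obtain a nonzero bounded linear functional annihilating $\overline{\gS}$, represent it via the Riesz representation theorem as a signed regular Borel measure $\mu$ that integrates every $\sigma_\vw$ to zero, and conclude $\mu = 0$ from Lemma~\ref{thm:1}, a contradiction. If anything you are more careful than the paper: its proof asserts that $\overline{\gS}$ is a closed proper \emph{subspace} although the lemma only hypothesizes a subset, whereas you explicitly reduce to $\operatorname{span}(\gS)$ (and note this is harmless in the intended application to symmetric one-hidden-layer networks) before applying Hahn--Banach, and you state the Riesz representation correctly in terms of measures rather than elements of $\gC([0,1]^n)$.
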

\begin{proof}
  Assume to the contrary that $\gS$ is not dense, which means $\overline{\gS}$ is a closed proper subspace of $\gC([0,1]^n)$. By Hahn-Banach theorem, let $f \in \gC([0,1]^n) \setminus \overline{\gS}$, there exists a bounded linear functional $L \in \gC^*([0,1]^n)$ such that $L(g) = 0 \,\,\forall g \in \overline{\gS}$ and $L(f) = \text{dist}(f,g) > 0$.

  By Riesz representation theorem, for all $L \in \gC^*([0,1]^n)$, there is a unique $\mu \in \gC([0,1]^n)$, such that $L(f) = \dt{f}{\mu}$ for all $f \in \gC([0,1]^n)$ and $\| L \| = \| \mu \|$. Note that $\forall \vw \colon \sigma_\vw \in \overline{\gS}$ by definition. Thus, $L(\sigma_\vw) = \dt{\sigma_\vw}{\mu} = 0$ for all $\vw$, which implies $\mu = 0$ by Lemma~\ref{thm:1}, and further implies that $\| L \| = 0$: a contradiction. Hence, $\gS$ is a dense subset of $\gC([0,1]^n)$.
\end{proof}

\begin{thm}
  Given any $f \in \gC([0,1]^n)$ and $\epsilon > 0$, there is a function $g^n_{\alpha,\mW} \in \gS$ satisfying
  $
  | f(\vx) - g^n_{\alpha,\mW}(\vx) | < \epsilon \,\,\text{for all}\,\, \vx \in [0,1]^n.
  $
\end{thm}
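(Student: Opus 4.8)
The plan is to obtain the theorem as an immediate consequence of Lemma~\ref{thm:2}: that lemma says that \emph{any} subset of $\gC([0,1]^n)$ containing $\Sigma$ is automatically dense, so it suffices to verify the single inclusion $\Sigma \subseteq \gS$, where $\gS$ is the family of symmetric single-hidden-layer networks $\{g^m_{\alpha,\mW} \in \gG \colon m = n,\ \mW = \mW^\intercal\}$. Granting this, Lemma~\ref{thm:2} yields that $\gS$ is dense in $\gC([0,1]^n)$, and unpacking density with respect to the uniform norm on the compact set $[0,1]^n$ gives exactly the claim: for the given $f$ and $\epsilon$ there is $g^n_{\alpha,\mW} \in \gS$ with $|f(\vx) - g^n_{\alpha,\mW}(\vx)| < \epsilon$ for all $\vx \in [0,1]^n$. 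So essentially all of the work will be in checking the inclusion.

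To show $\Sigma \subseteq \gS$, I would argue that every ridge function $\sigma_\vw$, $\vw \in \R^n$, can be realized as an $n$-neuron network with a \emph{symmetric} weight matrix, hence lies in $\gS$. Write $\vw = (w_1,\dots,w_n)$ and take $\mW \in \R^{n\times n}$ to be the matrix whose first row and first column are both equal to $\vw$ and whose remaining lower-right $(n-1)\times(n-1)$ block is zero. Then $\mW = \mW^\top$ by construction and its first row $\vw_1$ equals $\vw$. Choosing the output weights $\alpha = (1,0,\dots,0)$, the zero coefficients kill the contributions of rows $2,\dots,n$, so $g^n_{\alpha,\mW}(\vx) = \sum_{i=1}^n \alpha_i \sigma(\dt{\vw_i}{\vx}) = \sigma(\dt{\vw}{\vx}) = \sigma_\vw(\vx)$. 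Since $\vw$ was arbitrary, $\Sigma \subseteq \gS$, and combining with Lemma~\ref{thm:2} finishes the proof.

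I expect the only genuinely delicate point to be this ``padding'' step, because the symmetry constraint couples rows of $\mW$ with its columns: one cannot simply append zero rows to a $1$-neuron network and stay symmetric, and the construction above works only because the extra rows $2,\dots,n$ are rendered inert by their vanishing output weights even though they are not identically zero. A secondary thing to keep straight is that Lemma~\ref{thm:2} is being applied with $\gS$ itself — the symmetric-network family — in the role of the subset containing $\Sigma$, rather than with its linear span; since we are entitled to use Lemma~\ref{thm:2} (and behind it Lemma~\ref{thm:1} together with the Hahn--Banach and Riesz representation arguments) as already established, no further functional-analytic work is needed and the proof reduces cleanly to the elementary matrix construction above.
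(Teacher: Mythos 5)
Your proof is correct and follows the same overall strategy as the paper: both arguments reduce the theorem to the single inclusion $\Sigma \subseteq \gS$ and then invoke Lemma~\ref{thm:2} (taken as already established) together with the fact that density in $\gC([0,1]^n)$ is exactly the stated uniform-approximation property. The only difference lies in how the inclusion is verified. The paper writes a symmetric $\mW$ as $\mU \Lambda \mU^{-1}$ and argues, via a degrees-of-freedom count for orthogonal matrices, that some row of $\mU\Lambda$ can be made equal to any prescribed $\vw$, then silences the remaining units by setting $\alpha_j = 0$ for $j \neq i$. You instead exhibit the symmetric matrix explicitly --- first row and first column equal to $\vw$, lower-right block zero --- and take $\alpha = (1,0,\dots,0)$; the trick of rendering the extra, symmetry-forced nonzero rows inert through zero output weights is identical to the paper's. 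Your version of this step is more elementary and arguably tighter: the paper's ``without loss of generality, assume $\vu_i\Lambda$ has the full degree of freedom'' is left informal, and making it precise amounts to an explicit construction anyway (e.g.\ choose $\mU$ orthogonal with $i$-th row $\vw/\|\vw\|$ and $\Lambda = \|\vw\|\mI$, treating $\vw = \vzero$ separately). What the eigendecomposition viewpoint buys the paper is mainly expository, linking the result to the motivation that symmetric layers realize non-rotational transformations; logically it does no additional work. Note also that both proofs rely on Lemma~\ref{thm:2} exactly as stated, applied to the family $\gS$ itself rather than to a linear span, so your argument introduces no gap beyond what the paper already accepts.
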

\begin{proof}
  For any symmetric matrix $\mW$, we have
  $$
    \mW = \mU \Lambda \mU^{-1},
  $$
  where $\mU$ is an orthonormal matrix (let $\vu_i$ be its $i$th row) and $\Lambda$ is a diagonal matrix with diagonal elements being eigenvalues. Note that, for all $\vx \in \R^n$, there exists $\vy \in \R^n$, such that $\vx = \mU \vy$. Hence, $\mW \vx = \mU \Lambda \vy$. Then, $\gS$ can be rewritten as
  $$
  \gS = \{ g^m_{\alpha,\mU \Lambda} \in \gG \colon m = n,  \dt{\vu_i}{\vu_j} = 0 \,\,\forall i \neq j,  \dt{\mathbf{1}}{\vu_i} = 1 \,\,\forall i \}.
  $$
  Note that for a $n\times n$ orthogonal real matrix, the degree of freedom is $n^2 - \frac{n(n-1)}{2} \geq n$ given the orthogonality constraints. Without loss of generality, we assume that $\vu_i \Lambda \in \R^n$ is the row of $\mU \Lambda$ with the full degree of freedom. Then, for any function $\sigma_{\vw} \in \Sigma$, there exists $g^n_{\alpha,\mU \Lambda} \in \gG$ satisfying $g^n_{\alpha,\mU \Lambda} = \sigma_{\vw}$ by setting $\vw = u_i \Lambda$ and choosing $\alpha_i = 1$, $\alpha_j = 0 \,\,\forall j \neq
  i$. This implies $\Sigma \subseteq \gS \subseteq \gG$, and the result follows immediately from Lemma~\ref{thm:2}.
\end{proof}

\section{Implementations of block symmetry}

The proposed symmetry parameterization is a generic parameter reduction method, which can be easily adapted to various network architectures.
For example, convolutional / linear layers in feedforward networks (such as VGG~\citep{Simonyan15}, ResNeXt~\citep{Xie2016}, MobileNet~\citep{mobilenets} etc.) and in recurrent networks (such as LSTM~\citep{Hochreiter97longshort-term}, GRU~\citep{KyunghyunChoGRU} etc.) can be symmetrized. In the case of grouped convolution in ResNeXt, block symmetry can be imposed on each group kernel: for instance, suppose that filters are of shape \(g \times N \times N \times k \times k\), where \(g\) is the number of groups, we can impose symmetry on dimensions of \(N\times N\). However, symmetry is not directly applicable to DenseNet \citep{huang2016densely}, as the number of filters grows with every layer. We conduct some experiments along this direction in Section~\ref{sec:exp_cifar}.

\subsection{Imposing symmetry in convolutional neural networks}
We denote by $(\WW, \tilde{\WW})$ the whole set of parameters of the convolutional neural network, where $\WW$ is the subset of parameters to be symmetrized, and $\tilde{\WW}$ is the subset of free parameters. To be more specific, we assume there are totally $L$ convolutional layers being reparameterized to equip symmetric parameters. That is, $\WW := \{ \WW^l \}_{l=1}^L$ with $\WW^l$ satisfying certain symmetric properties:
\(\WW^l \in \R^{N_o \times N_i \times K_h \times K_w}\) is constructed so that the number of input channels is equal to the number of outputs channels (i.e. \(N_i = N_o = N\)) and the spatial domain is a square (i.e. \(K_h = K_w = K\)). We propose to impose symmetry on slices of $\WW^l$, namely, on the slice $\WW^l_i$, which is a square matrix.

Depending on which direction to slice the tensor, we have \emph{channel-wise symmetry} and \emph{spatial symmetry}. In general, we can write $\WW^l := \{ \WW^l_i \}_{i \in I}$. For channel-wise symmetry, $\WW^l_i$ is a $N \times N$ symmetric matrix, and $I := \big\{(k_h, k_w) \mid k_h, k_w \in \{1,\ldots,K\} \big\}$; For spatial symmetry, $\WW^l_i$ is a $K \times K$ symmetric matrix, and $I := \big\{(k_i, k_o) \mid k_i, k_o \in \{1,\ldots,N\} \big\}$.
Since these two symmetries are not exclusive, we can indeed impose both at the same time.

In theory, both channel-wise and spatial symmetries will reduce the freedom of layers, and their ability to approximate functions. Enforcing them to a shallow network can be problematic, since it may significantly reduce the expressive power of the network. Deep highway and residual networks, on the other hand, are more robust to symmetric weights, since they have many layers that are capable to make relatively small changes and iteratively improve the representation of the input \citep{unrolled_iterative}.
In addition, spatial symmetry can be further motivated by the success of scattering networks \citep{bruna2013invariant}, whose filters are fixed as wavelets and constructed to enjoy certain symmetric properties.

We show in experiments that the proposed symmetries can be applied to several modern deep neural networks without suffering a significant drop in both training and testing accuracy.

\subsection{Imposing symmetry in recurrent neural network}

We chose perhaps the most popular RNN variant, LSTM, which is known to be overparameterized, to experiment with symmetry.
In a LSTM cell, the weight matrices between the hidden unit and gates (input, forget, output) as well as the weight matrix between the hidden unit and itself are square, so it is immediately valid to apply aforementioned symmetry parameterizations. This reduces about $25\%$ parameters from the standard LSTM. We show in Section~\ref{sec:language} that the symmetrized LSTM works as well as the standard LSTM in language modeling.

\section{Experiments}

This section is composed as follows. We start with CIFAR experiments, where we first test various symmetry parameterizations on wide residual networks (WRN) by~\cite{Zagoruyko2016WRN}. After determining which parameterizations work best, we determine in which layers symmetry can be applied. We then test it with WRN of different widths and depths to determine the best configuration in terms of parameter reduction, computational complexity and simplicity. We also apply the proposed symmetrization to other architectures, and show that the conclusions drawn from CIFAR are able to transfer to larger datasets (ImageNet-1K dataset). Finally, we apply symmetrization to language modeling tasks.

We emphasize that our goal here is not to show state-of-the-art accuracy, but to show that very simple symmetry constraints can be used to significantly reduce the number of parameters in various network architectures.

There are two common ResNet variants are considered as baselines in the following experiments: \emph{basic} blocks and \emph{bottleneck} blocks~\citep{he2015deep}. Basic blocks have two \(3\times3\) convolutional layers and a parallel residual connection; bottleneck block is a combination of \(1\times1\), \(3\times3\) and \(1\times1\) convolutional layers. Similar to WRN, we refer to WRN-\(n\)-\(k\)-blocktype as the network of depth \(n\), width \(k\) (number of channels multiplier), and blocktype meaning either basic or bottleneck.

Code for all our experiments is available at~\url{https://github.com/hushell/deep-symmetry}.

\subsection{CIFAR experiments}
\label{sec:exp_cifar}

The results of various experiments on CIFAR-10/100 datasets are presented below.

\subsubsection{Symmetry parameterizations}
We first compare various symmetry parameterizations on CIFAR-10 with WRN-16-1-bottleneck, which is a relatively small network enabling us to perform quick experiments. The median validation errors over 5 runs are reported in Table~\ref{table:cifar_parameterizations} as well as a comparison in terms of the number of parameters needed in training and testing. %

\newcommand{\arch}[1]{#1}
\begin{table*}[ht]
    \centering\small
    \begin{tabular}{lcccc}
      \toprule
      \multirow{2}{*}{symmetry parameterization} & \multicolumn{2}{c}{\#parameters} & \multirow{2}{*}{CIFAR-10} \\
       & train & test & \\
      \midrule
      \arch{baseline (non-symmetric)}             & 0.219M & 0.219M & 8.49 \\
      \arch{$\normlone$ soft constraints}                     & 0.219M & 0.172M & 8.61 \\
      \arch{channelwise-triangular}               & 0.172M & 0.172M & 8.84 \\
      \arch{channelwise-average}                  & 0.219M & 0.172M & 8.83 \\
      \arch{channelwise-eigen}                    & 0.173M & 0.173M & 10.23 \\
      \arch{channelwise-LDL}                      & 0.172M & 0.172M & 9.15 \\
      \arch{spatial-average}                      & 0.219M & 0.187M & 9.70 \\
      \arch{spatial\&channelwise-average}         & 0.219M & 0.156M & 10.20 \\
      \bottomrule
    \end{tabular}
    \caption{Various parameterizations on CIFAR-10 with WRN-16-1-bottleneck. We show median error over 5 runs and the numbers of parameters used in training and testing.}
    \label{table:cifar_parameterizations}
\end{table*}

All symmetry parameterizations have certain drop in performance comparing to the baseline. We observe that channel-wise triangular / average parameterizations have the lowest drop. Eigen parameterization does not work well in this experiment, which is possibly a consequence of $\mV$ is not forced to be an orthogonal matrix. On the other hand, LDL parameterization as an alternative attains a better result. We also test soft channel-wise $\normlone$-norm (see eq.~\ref{eq:soft}), where the
number of parameters remains the same in training, and reduced at test time by using upper triangular weights only. The slackness is controlled by the coefficient \(\rho\). For a large $\rho$, the soft-constrained symmetrization is slightly better than hard-constrained symmetrizations.

Spatial symmetry parameterizations do not work as well as channel-wise symmetry. This is expected since the size of spatial dimensions is much smaller compared with channel dimensions (i.e. $N_i = N_o > k_h = k_w$). Thus, the constraints imposed on spatial dimensions are much harsher making the learning much more difficult.

Based on the aforementioned analysis, we choose triangular parameterization (in terms of validation accuracy, parameter reduction and simplicity both at train and test time) as the main method to conduct all experiments further in this section.

\subsubsection{Wide Residual Networks with symmetry}
In this section we compare symmetrized WRN to its wider and thinner counterparts, and discuss the choice of network architecture. We use the terms \textit{conv0}, \textit{conv1} and \textit{conv2} to refer to the first, the second and the third convolutional layers respectively (basic blocks have only \textit{conv0} and \textit{conv1}). For our experiments we choose the bottleneck and constrain the mid-bottleneck $3\times3$ \textit{conv1} convolution to be symmetric, keeping $1\times1$ \textit{conv0} and \textit{conv2} unconstrained. This choice is because the approximating power of the residual block is least reduced, as real eigenvalues of \textit{conv1} are rotated by the surrounding convolutions, resulting in overall rich parameterization.

We present results for WRN-40-1-bottleneck and WRN-40-2-bottleneck trained on CIFAR in Table~\ref{table:cifar_resnet40}. We also train thinner networks with the same number of parameters to compare with their symmetric variants. On both datasets symmetric parameterization compares favorably to both wider and thinner non-symmetric counterparts in terms of accuracy and number of parameters.

\begin{table*}[ht]
  \centering\small
  \begin{tabular}{lcccccc}
    \toprule
    base network & symmetry location & width & \#params & CIFAR-10 & CIFAR-100 \\
    \midrule
    \multirow{3}{*}{WRN-40-1-bottleneck}
    & none    & 1     & 0.59M & 6.12 & 26.86 \\
    & none    & 0.875 & 0.45M & 6.29 & 28.36 \\
    & conv1   & 1     & 0.45M & 6.24 & 27.66 \\
    \midrule
    \multirow{3}{*}{WRN-40-2-bottleneck}
    & none & 2.0 & 2.34M & 4.95 & 22.51 \\
    & none & 1.75 & 1.79M & 5.12 & 23.18 \\
    & conv1 & 2.0 & 1.76M & 4.96 & 22.98 \\
    \bottomrule
  \end{tabular}
  \caption{CIFAR test error (median of 5 runs) of triangular channel-wise parameterization on WRN-40 with bottleneck layers. \textit{conv0} and \textit{conv1} refer to the first and the second convolutional layers respectively.}
  \label{table:cifar_resnet40}
\end{table*}

We further illustrate this in Fig.~\ref{fig:cifar_basic_and_bottleneck}, where we show training and validation accuracy of WRN with respect to various depths and widths. WRN has a lower accuracy in shallower networks when the training accuracy does not reach 100\%, that is, the network struggles to fit into training data. In such cases symmetry constraints damage both training and validation accuracy significantly. Here we also notice that symmetry constraints cause much smaller accuracy drop in networks which are able to fit into training data perfectly, having almost 100\% training accuracy. We hereafter refer to such networks as \textit{overparameterized}, and we should note that overparameterization should not be confused with overfitting, that is, overparameterized network do not suffer from poor generalization.

\begin{figure*}[ht]
  \subcaptionbox{WRN of various depth and width with bottleneck blocks and triangular symmetry. Dash (solid) lines denote train (val) accuracy respectively (medians over 5 runs).\label{fig:cifar_basic_and_bottleneck}}[0.48\textwidth]{
  \includegraphics[width=0.5\textwidth]{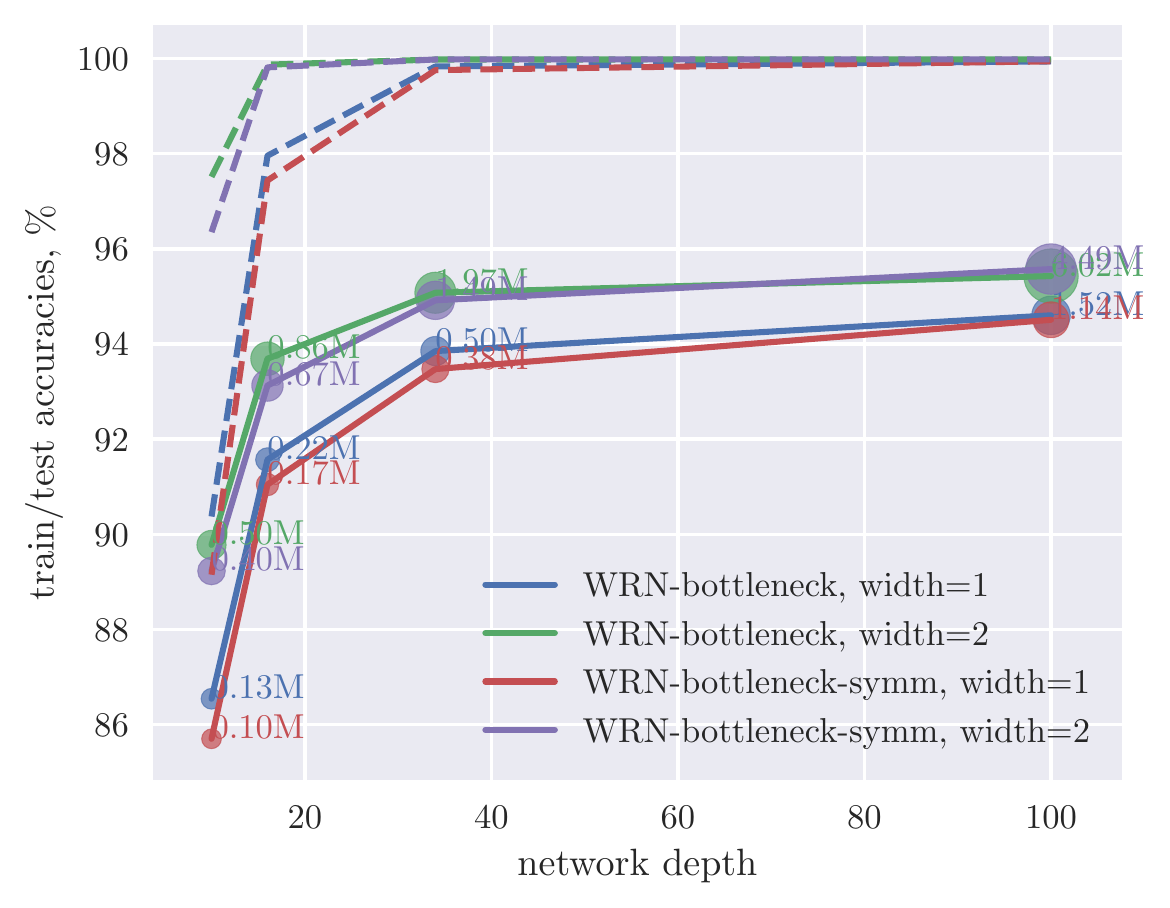}
  }
  \hfill
  \subcaptionbox{Convergence curves of top-1 (top lines) and top-5 (bottom lines) validation errors of ResNet-50 and its triangularly symmetrized variant on ImageNet. \label{fig:imagenet_resnet50}}[0.48\textwidth]{
  \includegraphics[width=0.485\textwidth]{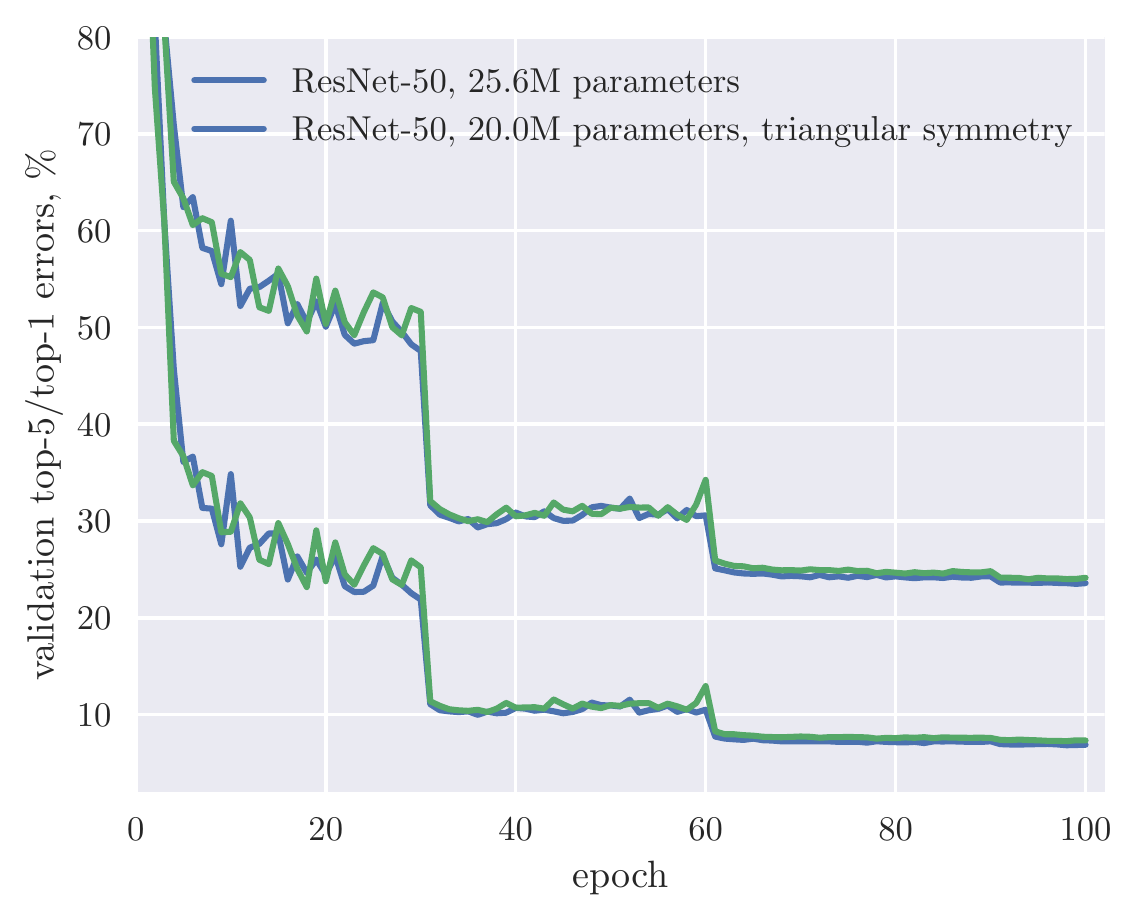}
  }
  \caption{Image classification results for ResNet with symmetric filters.}
\end{figure*}

\subsubsection{Other architectures}
In this section we show that triangular symmetry works as well on other architectures and larger networks. We pick a residual network variant, ResNeXt, which reduces the number of parameters and computational complexity in ResNet by using grouped \(3\times3\) convolution in bottleneck block. For large networks we use WRN-28-10, both basic and bottleneck variants, and a simple feedforward VGG. We also apply symmetry to MobileNets, a popular architecture for mobile devices. Even though being feedforward, it compares favorably to smaller residual networks such as ResNet-18 and ResNet-34, with significant reduction in parameters needed to achieve the same accuracy.

\begin{table*}[t]
  \centering\small
  \begin{tabular}{lcccc}
    \toprule
    network & symmetry & \#params & CIFAR-10 & CIFAR-100 \\
    \midrule
    ResNeXt-16-2-4  &       & 0.57M & 6.93 & 28.3 \\
    ResNeXt-16-2-4  & \chk  & 0.53M & 7.18 & 28.86 \\
    MobileNet       &       & 3.2M  & 7.6  & 31.05 \\
    MobileNet       & \chk  & 2.0M  & 7.91 & 31.48 \\
    VGG             &       & 20M   & 6.11 & 25.75 \\
    VGG             & \chk  & 10.8M & 6.19 & 26.8 \\
    WRN-28-10-basic &       & 36.5M & 3.99 & 18.7 \\
    WRN-28-10-basic & \chk  & 26.8M & 3.97 & 19.1 \\
    WRN-28-10-bottleneck &      & 39.8M & 3.96 & 18.94 \\
    WRN-28-10-bottleneck & \chk & 30.2M & 3.77 & 18.79 \\
    \bottomrule
  \end{tabular}
  \caption{Triangular symmetry applied to various architectures on CIFAR. Triangular channel-wise symmetry is imposed on every second convolution in residual blocks in WRN and ResNeXt, and in all square convolutions in MobileNet and VGG. Median test accuracy of 5 runs is reported.}
  \label{table:cifar_architectures}
\end{table*}

Results are presented in Table~\ref{table:cifar_architectures}. We put triangular symmetry constraint on the second layer in each residual block of WRN-basic-28-10, and on \(3\times3\) convolutional layers in WRN-bottleneck-28-10. In both cases, there are no drops in accuracy, as expected in overparameterized networks which easily achieve 100\% training accuracy. Triangular parameterization works well even with ResNeXt, which has much less parameters in \(3\times3\) layers. That is also
the case for VGG, which, in contrast to others, does not have \(1\times1\) or depth-wise convolutions between layers with symmetry. We believe the fact that there is no big reduction in accuracy is due to the existence of redundant parameters. In MobileNet, we parameterize all square \(1\times\) layers, and observe relatively small drop in accuracy.

Overall, we observe that overparameterized networks can easily benefit from symmetry parameterizations in terms of the number of parameters and potential speedup from more efficient implementation. It is surprising to see that triangular parameterization works well even with ResNeXt and MobileNet, which have already been designed to enjoy weight sharing. Among all these experiments, VGG with triangular parameterization reduces almost half of the parameters (i.e. $9.2$ million), yet the accuracy still remains almost the same.

\subsubsection{Importance of batch normalization}
One might notice that batch normalization~\cite{batch_norm} could potentially be a symmetry-breaking component when combined with symmetric convolutional or linear layers, as it can be viewed as an affine transform of each feature plane, or a diagonal fully connected layer. We, however, successfully apply symmetric parameterization to networks without batch normalization.%

To test the influence of batch normalization on symmetric parameterization, we trained Network-In-Network on CIFAR with and without batch normalization, convergence plots are presented on Fig.~\ref{fig:batch_norm}. As can be seen, the difference in accuracies is very similar in both cases. If batch normalization played a significant role in improving symmetric parameterization, Network-In-Network without batch normalization (left) and triangular symmetry would have a higher accuracy drop. We use learning rate of 0.1 to train the networks with batch normalization, and of 0.01 without. Also, for triangular parameterization without batch normalization we reduce learning rate on upper triangular part by 2 to compensate for gradient magnitude increase due to sharing.

\begin{figure}[ht]
  \includegraphics[width=0.5\textwidth]{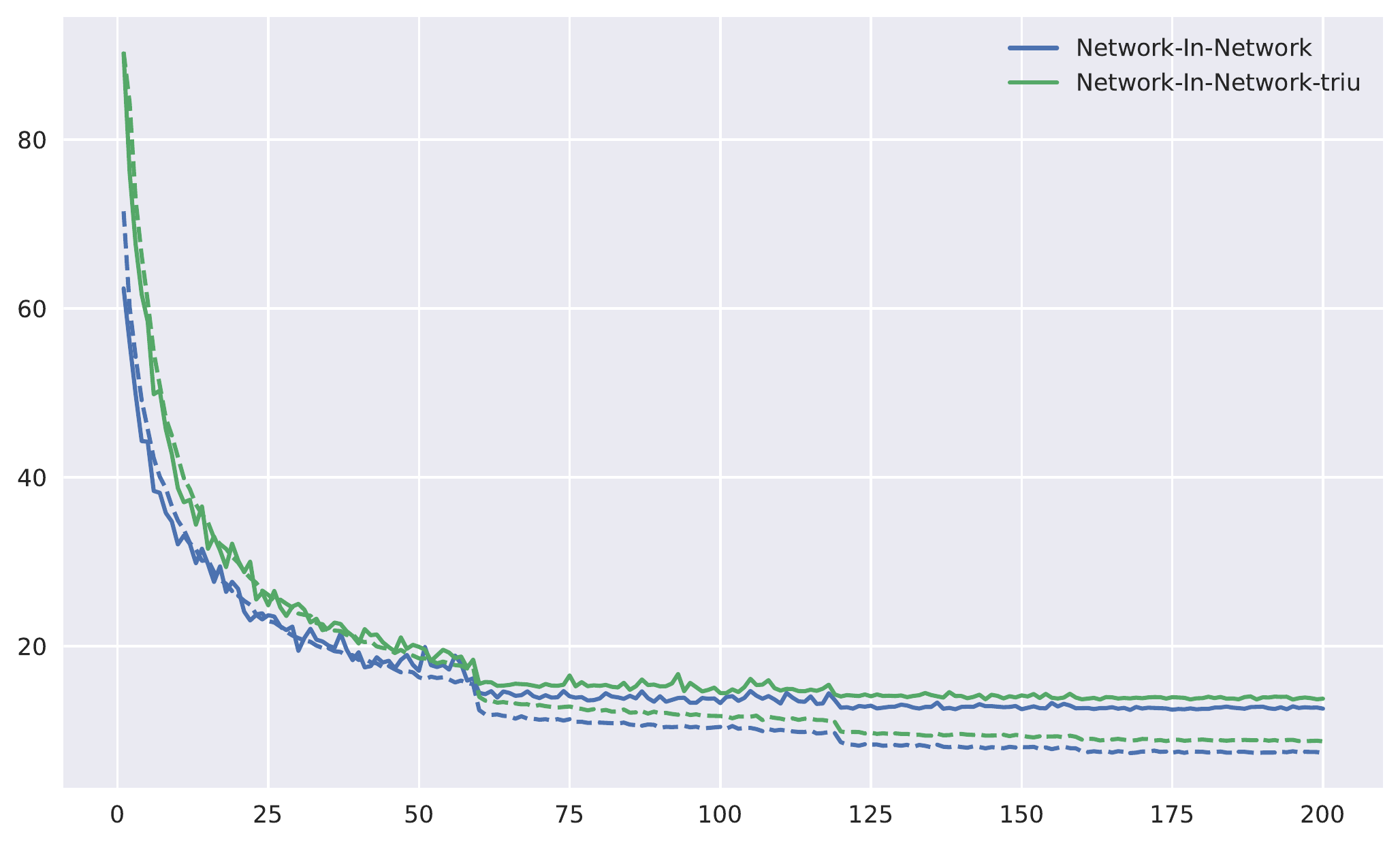}
  \includegraphics[width=0.5\textwidth]{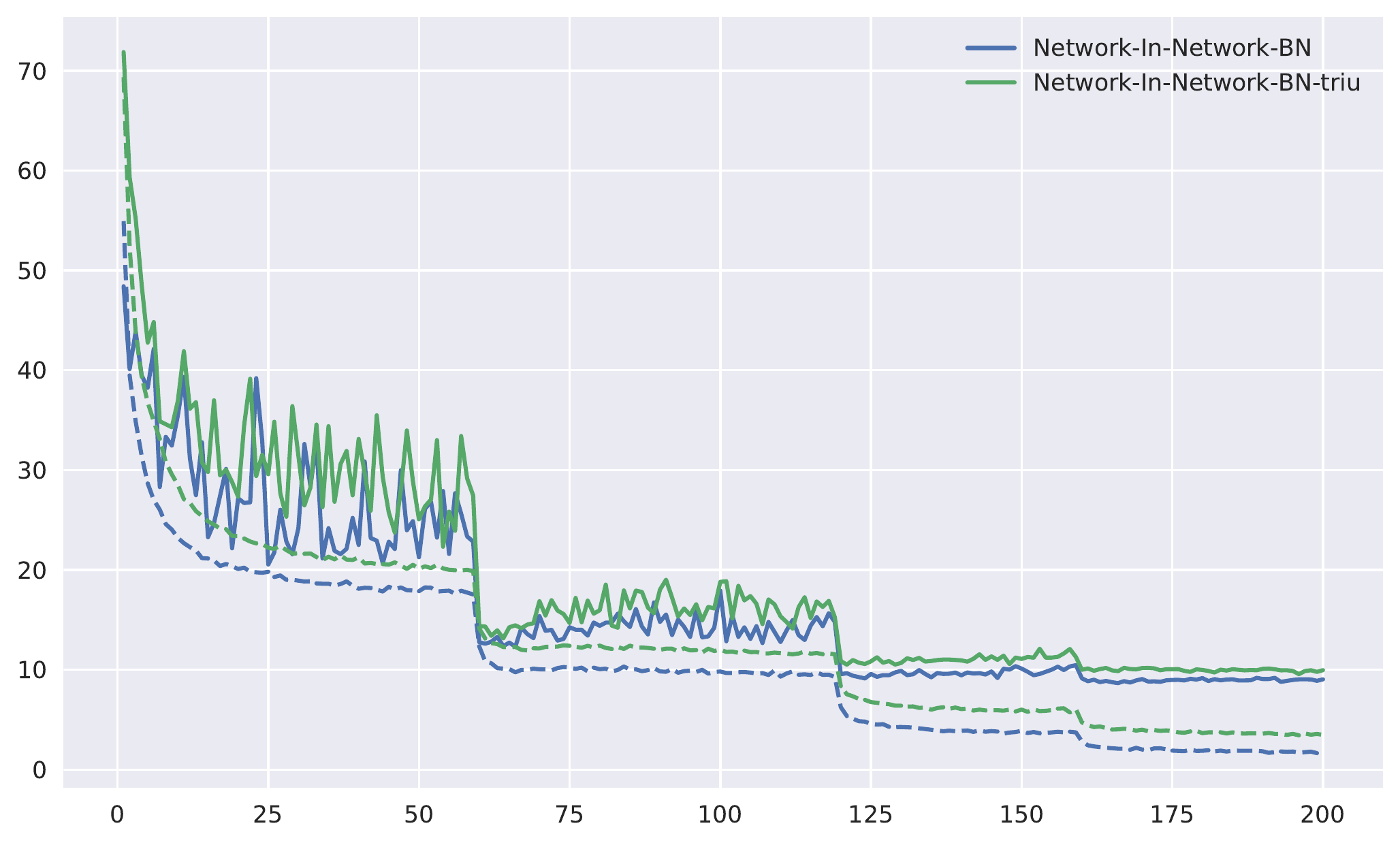}
  \caption{Combining symmetric parameterization without batch normalization (left) and with (right), Network-In-Network. Training accuracy in shown by dashed lines, validation - solid. Accuracy drop is similar in both cases.}
  \label{fig:batch_norm}
\end{figure}

\subsubsection{Combining symmetric parameterization with ShaResNet}
\label{sec:ShaResNet}

We discussed the possibility of combining symmetry parameterizations with other parameter reduction methods in Section~\ref{methods:connection}. Here, we conduct an experiment that combines 2-way channelwise symmetry with ShaResNet \cite{Boulch}, which is a weight sharing method that all \textit{conv1} of residual blocks (bottleneck block) within a group (i.e. layers between two dimensionality reduction convolutions) share the same weights. The results are shown in Table~\ref{table:sharesnet}.
It can be seen that the combination further reduces the number of parameters, while the testing performance is only slightly affected. In particular, with the bottleneck architecture, triangular symmetry pluses ShaResNet reduce about 33\% of parameters and the performance drop is less than 1\%.

\begin{table*}[ht]
  \caption{Combining ShaResNet \cite{Boulch} with 2-way channelwise symmetry. Test errors (mean/std/median over 5 runs) are compared for different symmetry parameterizations on CIFAR-10 using WRN-16-1-bottleneck with symmetric \textit{conv1}.}
  \centering
  \begin{tabular}{lcccccc}
    \toprule
    network & share & symmetry & \#params & mean & std & median \\
    \midrule
    \multirow{5}{*}{WRN-16-1-bottleneck}
    &      & none         & 0.219M    & 8.40 & 0.24 & 8.49 \\
    & \chk & none         & 0.171M    & 8.64 & 0.15 & 8.63 \\
    &      & triangular   & 0.172M    & 8.76 & 0.19 & 8.84 \\
    & \chk & triangular   & 0.147M    & 9.49 & 0.34 & 9.36 \\
    & \chk & average      & 0.171M    & 9.35 & 0.23 & 9.25 \\
    \bottomrule
  \end{tabular}
  \label{table:sharesnet}
\end{table*}

\subsubsection{N-way symmetries}
As discussed in Section~\ref{sec:hard_constr}, it is possible to push forward the triangular parameterization to a more general N-way triangular parameterization. In addition to triangulizing and blocking, we also consider a chunking implementation (a naive N-way weight sharing: given $V$ as a $M \times \frac{M}{N}$ matrix. We define by $\text{tile}_{N\times}(V)$ a transforming function to tile/copy $V$ $N$ times to construct $\hat{W} = \text{tile}_{N\times}(V)$) as an example to show that carefully designed N-way symmetries yield better performance than a naive N-way weight sharing.
Recall that standard triangular parameterization is equivalent to 2-way triangulizing. In this section, we test our proposals including chunking, blocking and triangulizing to achieve N-way channelwise symmetry. For chunking, we examine the cases of $N = 2, 4, \ldots, 64$. If the number of channels in a convolutional layer is less than N, we simply set N equal to the number of channels. For blocking and triangulizing, it is non-intuitive to come up
with N-way symmetry for $N > 4$, so we only test the cases of $N = 4, 16$ and $N = 2, 4, 8$ respectively.

In our experiments, as shown in Fig.~\ref{fig:chunk,block,triang}, chunking causes the steepest linear decrease in validation accuracy. Triangular parameterizations work fine even in the case of 8-way symmetry, which reduces almost $\frac{3}{8}$ percentage of parameters from 2-way symmetry while only suffer about 1\% decrease in accuracy.

\begin{figure}[ht]
  \centering
    \includegraphics[width=0.7\columnwidth]{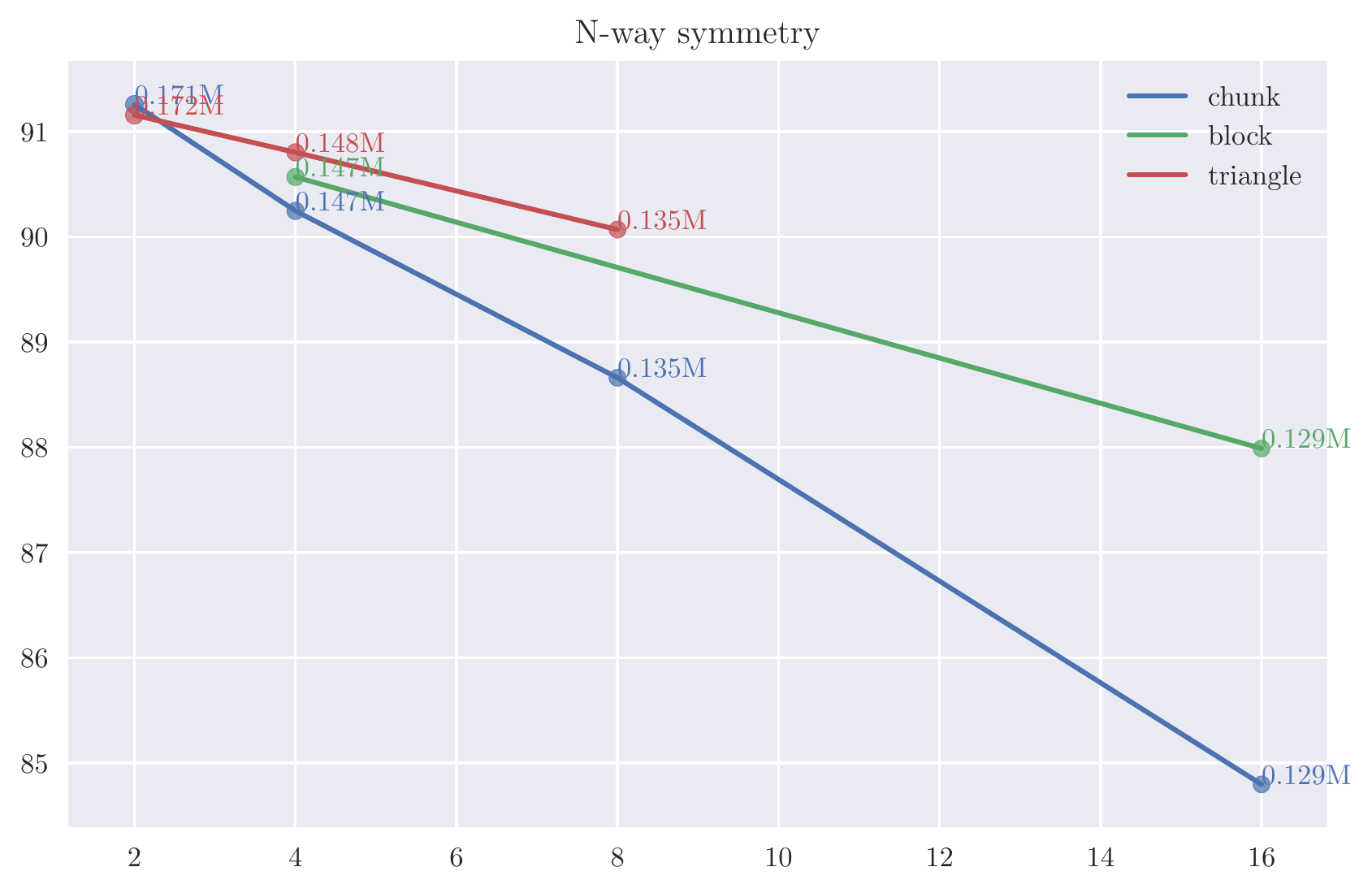}
    \caption{N-way sharing (chunking) v.s. N-way symmetries (blocking, triangulizing) on CIFAR-10 with WRN-16-1-bottleneck. x-axis represents N. y-axis represents the accuracy.}
    \label{fig:chunk,block,triang}
\end{figure}

\subsection{ImageNet experiments}

In this section we present ImageNet results for networks with triangular symmetry and without, to check if our conclusions from CIFAR transfer to larger dataset.

We start with results for relatively small networks, MobileNet and ResNet-18. In MobileNet we impose triangular symmetry on all square \(1\times1\) convolutional layers. ResNet-18 has basic block architecture and doesn't have enough parameters to fit into training data well, so we impose symmetry on every second \(3\times3\) convolutional layer in block. Both MobileNet and ResNet-18 are relatively shallow networks and have 28\% less parameters with triangular symmetry, so, as expected, the drop in accuracy is significant, see Table~\ref{table:imagenet}. Still, constrained MobileNet has only 3M parameters and achieves almost the same accuracy with ResNet-18.

\begin{table*}
  \centering\small
  \begin{tabular}{lcccc}
    \toprule
    network & symmetry & \#params & top-1 & top-5 \\
    \midrule
    MobileNet &      & 4.2M  & 28.18 & 9.8 \\
    MobileNet & \chk & 3.0M  & 30.57 & 11.6 \\
    ResNet-18 &      & 11.8M & 30.54 & 10.93 \\
    ResNet-18 & \chk & 8.6M  & 31.44 & 11.55 \\
    ResNet-50 &      & 25.6M & 23.50 & 6.83 \\
    ResNet-50 & \chk & 20.0M & 23.98 & 7.25 \\
    ResNet-101 &     & 44.7M & 22.14 & 6.09 \\
    ResNet-101 &\chk & 34.0M & 22.36 & 6.35 \\
    \bottomrule
  \end{tabular}
  \caption{ImageNet results for networks with triangular symmetry parameterization. Smaller networks such as MobileNet and ResNet-18 have more significant drop in accuracy than larger ResNet-50 and ResNet-101. The latter have 23\% less parameters than non-symmetric counterparts and have drops in accuracy of 0.5\% and 0.2\% correspondingly}
  \label{table:imagenet}
\end{table*}

As for large networks, we trained ResNet-50 and ResNet-101 with bottleneck block architecture and triangular symmetry on all \(3\times3\) layers. As on CIFAR, drop in accuracy is much smaller for these: a reduction of 23\% parameters (which would correspond to approximately ResNet-40 for ResNet-50 and ResNet-77 for ResNet-101) causes only 0.5\% accuracy drop compared to unconstrained ResNet-50, and even smaller for ResNet-101, which is about 0.2\%. We show convergence curves for both ResNet-50 and its symmetric variant on Fig.~\ref{fig:imagenet_resnet50}.%

All networks were trained in the same conditions and with the same hyper-parameters. We used large mini-batch training approach as proposed in~\cite{large_minibatch} on 8 GeForce 1080Ti GPUs, scaling learning rate proportionally to mini-batch size. Also, we do not regularize batch normalization and depth-wise convolution parameters in MobileNet. Surprisingly, our MobileNet and ResNet baselines outperform the original networks proposed in \cite{mobilenets} and \cite{he2015deep}. We plan to make our code and networks available for download online.

\subsection{Language modeling}
\label{sec:language}

We test symmetry on a common language modeling Penn-Tree-Bank~\citep{Marcus1993BuildingAL} dataset. Our experimental setup reflects that of \citep{zarembalstm}\footnote{\scriptsize\url{https://github.com/pytorch/examples/tree/master/word_language_model}}. We use a network with 2-layer LSTM and dropout, and test out a medium size configuration with 650 neurons (dropout 0.5), and a larger model with 1500 neurons (dropout 0.65).
We try to symmetrize weights corresponding to input and hidden and gates separately and altogether, as well for each gate separately. Surprisingly, we find that adding symmetry on all hidden gates does not hurt, and even obtain slightly lower perplexity for both medium and large models. The results are presented in Table~\ref{table:penntree}. We use the average parameterization $\frac{1}{2}(\mW + \mW^\top)$ as we notice it gives slightly better results. Also, there is no
$\normltwo$-regularization applied during training, so the original $\mW$ weights converge to be almost symmetric. There is also a large variation in final validation perplexity, so we train each network 5 times with different random seed and report mean$\pm$std results for all models.

\begin{table}
  \centering
  \begin{tabular}{lcccccc}
    \toprule
    \multirow{2}{*}{Model}  & \multirow{2}{*}{symmetry} & \multirow{2}{*}{\#parameters} & \multicolumn{2}{c}{\texttt{Penn-Tree-Bank}} & \multicolumn{2}{c}{\texttt{wikitext-2}} \\
     &  &  & validation & test & validation & test \\
    \midrule
    LSTM-650 &              & 6.8M &  $85.38\pm0.29$ & $81.49\pm0.04$ & $99.59\pm0.17$ & $94.26\pm0.21$  \\
    LSTM-565 &              & 5.1M &  $85.48\pm0.45$ & $81.47\pm0.28$ & $100.60\pm0.28$ & $94.99\pm0.24$ \\
    LSTM-650 & \chk         & 5.1M &  $83.73\pm0.42$ & $79.73\pm0.23$ & $100.81\pm0.45$ & $95.43\pm0.37$ \\
    \midrule
    LSTM-1500 &             & 36M &  $81.90\pm0.54$ & $77.95\pm0.41$ &  $95.59\pm0.34$ & $90.92\pm0.20$ \\
    LSTM-1300 &             & 27M &  $80.71\pm0.14$ & $77.42\pm0.18$ &  $96.08\pm0.26$ & $90.77\pm0.21$ \\
    LSTM-1500 & \chk        & 27M &  $79.66\pm0.41$ & $75.69\pm0.32$ &  $97.13\pm0.77$ & $91.97\pm0.85$ \\
    \bottomrule
  \end{tabular}
  \caption{Language modeling perplexity (lower is better) on Penn-Tree-Bank and wikitext-2 datasets. Only hidden gate weights are symmetrized with triangular parameterization. We count only parameters in RNN, skipping encoder and decoder. Mean$\pm$std results over 5 runs are reported.}
  \label{table:penntree}
\end{table}

For a fair comparison we also trained thinner networks with 565 and 1300 hidden neurons for medium and large networks correspondingly, so that the number of parameters is approximately equal to those of hidden-symmetrized networks. Symmetrized versions of large networks compare favourably to these networks.

We also include experimental results on a larger wikitext-2~\citep{Merity2016PointerSM} dataset in Table~\ref{table:penntree}, which is about 2 times larger than Penn-Tree-Bank, as well experiments with symmetry on each gate separately.
We apply averaging symmetry to hidden gates of LSTM and compare to thinner networks with comparable number of
parameters. In this case symmetry slightly hurts perplexity of medium model, and more significantly large model. This might be due to different dropout regularization in the networks (we use the same dropout rates as in Penn-Tree-Bank\footnote{Suggested by \scriptsize\url{https://github.com/pytorch/examples/tree/master/word_language_model}}, which may not be the best choices for wikitext-2).%

\section{Conclusions}

In this paper, we have shown that, quite surprisingly, deep neural network weights can be successfully parameterized to be symmetric without suffering a significant loss in accuracy. The proposed symmetry parameterizations could lead to potentially significant improvements for a wide range of mobile applications in terms of computational efficiency (dedicated routines for symmetric convolutions and matrix multiplication can be applied) and storage efficiency (memory requirements for storing network weights are dramatically reduced).

For future work, it would be interesting to compare with other structural weight matrices (e.g.,~\cite{zhao2017theoretical}) with computational benefits and understand what kinds of inductive bias are implied. It should also be noted that our universal approximation analysis holds only for approximating single univariate continuous functions. It remains an open question what theoretical guarantees a symmetric deep neural network can provide for more general multivariate functions.

\section*{Acknowledgements}

We thank Alexander~Khanin and Soumik Sinharoy for providing us dedicated deep learning resources, without which this work would not be possible. S. Zagoruyko was also supported by the DGA RAPID project DRAAF.

\bibliography{bibliography}
\bibliographystyle{apalike}

\end{document}